\newcommand{\Bem}[1]{}
\newcommand{\keywords}[1]{\textbf{\textit{Keywords: }} #1}
\title{High-Dimensional Wide Gap $k$-Means Versus Clustering Axioms}
\author{Mieczys{\l}aw A. K{\l}opotek }
\date{December 2022}
\begin{document}

\newcommand{\eref}[1]{(\ref{#1})}
\newtheorem{twierdzenie}{Theorem}
\newtheorem{definicja}{Definition}
\newcommand{\fhl}{}
\newcommand{\hl}{}
\newcommand{\st}{}
\newcommand{\T}{^T}
\newcommand{\K}{K}

\newcommand{\FI}{\left(\mathbf{I}-\mathbf{1}\mathbf{s}^T\right)}
\newcommand{\Fi}[1]{\left(\mathbf{I}-\mathbf{1}\mathbf{#1}^T\right)}
\newcommand{\FIT}{\left(\mathbf{I}-\mathbf{s}\mathbf{1}^T\right)}
\newcommand{\FiT}[1]{\left(\mathbf{I}-\mathbf{#1}\mathbf{1}^T\right)}
\newcommand{\FIOne}{\left(\mathbf{I}-\frac{\mathbf{1}\mathbf{1}^T}{m}\right)}

\newcommand{\CM}{\left(\mathbf{I}-\frac{\mathbf{1}\mathbf{1}^T}{m}\right)} 
\newcommand{\nonI}{\left( {\mathbf{1}\mathbf{1}^T} -\mathbf{I}\right)} 
\newcommand{\OO}{\mathbf{1}\mathbf{1}^T} 

\newcommand{\KMEANSSTUFF}[1]{#1}
\newcommand{\KS}[1]{#1}

\maketitle
\begin{abstract}
Kleinberg's axioms for distance based clustering proved to be contradictory. 
Various efforts have been made to overcome this problem. 
Here we make an attempt to handle the issue by embedding in high-dimensional space and granting wide gaps between clusters. 
\end{abstract}
\keywords{clustering axioms * clusterability}

\section{Introduction} 
Kleinberg \cite{Kleinberg:2002} proposed three seemingly obvious clustering axioms: 
richness, scale-invariance and consistency. 
The \emph{consistency axiom} states in particular, that a clustering function $f(d)$ clustering some set $S$ (with distance $d$ between datapoints) into the set of clusters $\Gamma$, st return the same clustering $\Gamma$ if applied to a different distance function $d'$ such that $d(P,Q)\ge d'(P,Q)$ if $P,Q$ are from different clusters of $\Gamma$ and  $d(P,Q)\le d'(P,Q)$  if $P,Q$ are from the same cluster.
They proved to be  contradictory. Is therefore clustering unnatural? 
There are several unnatural components of Kleinberg's axiomatic system. 
First of all, we should expect a proper behaviour of a clustering algorithm only if the data as such is {cluster}able. $k$-means algorithm would cluster anything, the question is whether or not the outcome makes any sense. 

Therefore we consider {data}sets where the clustering is there. We shall speak about $k$-clustering if we cluster into $k$ groups. 

We will refer in this note predominantly to the widely used $k$-clustering algorithm called $k$-means. 
$k$-means algorithm was designed  to operate primarily in the Euclidean space. 
Let us recall the $k$-means quality function which is minimized by $k$-means.

 $$Q(\Gamma)=\sum_{i=1}^k \sum_{\mathbf{x}\in C_i} ||\mathbf{x}-\boldsymbol\mu(C_i)||^2$$. 
 which may be reformulated as 

\begin{equation} \label{eq:Q::kmeans}
Q(\Gamma)=\sum_{i=1}^m\sum_{j=1}^k u_{ij}\|\mathbf{x}_i - \boldsymbol{\mu}_j\|^2
=\sum_{j=1}^k \frac{1}{2n_j} \sum_{\mathbf{x}_i \in C_j} 
\sum_{\mathbf{x}_l \in C_j} \|\mathbf{x}_i - \mathbf{x}_l\|^2  
\end{equation} 

Let us recall that according to Kleinberg, $k$-means \emph{is not a clustering algorithm} as it fails on richness and consistency axioms. 

Our basic contribution is to show that we can create a version of $k$-means algorithm that meets the Kleinberg axiom set intuition for {data}sets that have well defined clusterability property. 

The note is structured as follows:
We recall earlier work in Section \ref{sec:prevWork}.
Then we propose a first approach to the solution in Section \ref{sec:varsep}. 
While it works under Kleinberg's definition of "distance", it is not quite suitable for $k$-means algorithm because it leads outside of the Euclidean space.
Therefore we modify our approach in Section \ref{sec:residualsep}.
Section \ref{sec:discussion} is devoted to a brief discussion of our outcome and Section \ref{sec:conclusions} summarizes the results and outlines further research directions.

\section{Previous Work} \label{sec:prevWork}

Obviously  it is not acceptable and is disastrous  for the domain of clustering algorithms if an axiomatic system consisting of ''natural axioms'' is self-contradictory. It means in plain words that the domain of clustering algorithms is a kind of fake science. 
Therefore numerous efforts have been made to cure such a situation by proposing different axiom sets or modifying Kleinberg's one mention Kleinberg himself introduced the concept of partition $\Gamma'$ being a refinement of a partition $\Gamma$, 
if for every set $C' \in \Gamma'$, there is a
set $C \in  \Gamma$ such that $C' \subseteq C$.
He  defines Refinement-Consistency, a relaxation of Consistency, to require that
if distance d' is an f (d)-transformation of d, then f(d') should be a refinement of f(d).
Though  there is no clustering function that satisfies Scale-{In}variance, Richness, and Refinement-Consistency, but if one defines Near-Richness as Richness without  the partition in which each element is in a separate cluster, then  there exist clustering functions f that satisfy Scale-{In}variance and
Refinement-Consistency, and Near-Richness (e.g.  single-linkage with the distance-($\alpha\delta$) stopping condition, where
$\delta=  min_{i,j} d(i, j)$ and $\alpha\ge 1$.  

The refinement consistency is not a good idea as the idea what is a cluter and what not is not appropriately reflected. Furthermore, it does not repair the $k$-means classification as a non-clustering algorithm.   
To overcome Kleinberg's contradictions, 
\cite{Ben-David:2009} proposed to axiomatize clustering quality function and not the clustering function itself. This is a bad idea because no requirements are imposed onto the clustering function. 
\cite{vanLaarhoven:2014} proposes to go over to the realm of graphs and develops a set of axioms for graphs. The approach is not applicable to $k$-means. 
\cite{%
Ackerman:2010NIPS} and \cite{Meila:2005} proposed to use the "axioms" not as a requirement to be met by all algorithms, but rather as a way to classify clustering functions.  
\cite{Strazzeri:2021} suggests to change the consistency axiom for graphs. 
\cite{Hopcroft:2012}  proposes to seek only clusters with special properties, in his case  to cluster the {data}sets into equal size clusters.

We have also proposed several approaches to removing the contradictions in the Kleinberg's axiomatic system, see e.g. \cite{MAKRAK:2020:limcons,MAKSTWRAK:2020:motion,MAKRAK:2020:fixdimcons,RAKMAK:2019:probrich,MAK:2022:kmeanspreserving,RAK:MAK:2019:perfectball,Klopotek:2022continuous,ISMIS:2022:richness}. All of them were based by assumption that a cluster should be enclosed in a kind of hyper-ball and the distances between the {hyper}balls shall be kept large enough. In this note we take a different stand referring rather to variation within a cluster. 

We have demonstrated that the consistency axiom alone is self-contradictory and proposed a remedy in terms of a convergent consistency \cite[Section 3]{MAKRAK:2020:limcons}.
Kleinberg's consistency transformation allowed for creation of new clusters within a cluster which led directly to the contradiction of the axiomatic system. 
Convergent consistency makes creation of clusters within a cluster impossible because it requires that when decreasing distances within a cluster, the shorter distances are reduced by lower percentage than longer ones.  

It has to be noted that our and other efforts in removing Kleinberg's axiomatic contradictions went along the research on so-called clusterability.  As mentioned, \cite{Hopcroft:2012} made a suggestion that restricting oneself to special data structures can overcome Kleinberg's contradictions. That is one looks rather at clustering of data that fulfil some properties of clusterability. 
Though a number of attempts have been made to capture formally the intuition behind clusterability, none of these efforts seems to have been successful, as Ben-David exhibits in \cite{Ben-David:2015} in depth.  A   paper by Ackerman et al.  \cite{Ackerman:2016} partially eliminates some of these problems, but regrettably at the expense of non-intuitive  user-defined parameters. 
As Ben-David mentioned,   the research in the area  does not address popular algorithms  except for $\epsilon$-{Separated}ness clusterability criterion related to $k$-means proposed  by Ostrovsky et al. \cite{Ostrovsky:2013}.
We have made some efforts in this direction in \cite{MAK:2017:clusterability}. This note also refers to clusterability while clustering via $k$-means.

\section{Variation{al} Cluster Separation} \label{sec:varsep}

Let us introduce a couple of useful concepts. 
First of all recall the fact that Kleinberg's consistency axiom leads definitely outside of the domain of Euclidean space. Therefore, to work with $k$-means algorithm, we need a reformulation of the $k$-means cluster quality function.
Let us first recall Kleinberg's "distance" concept. 
\begin{definicja}
For a given discrete set of points $S$, 
the function $d: S \times S \rightarrow \mathbb{R}$ will be called  a pseudo-distance function  iff  $d(x,x)=0$, $d(x,y) =d(y,x)$ and   $d(x,y) > 0$ for distinct $x,y$.
\end{definicja}

Following the spirit of kernel $k$-means as was exposed in \cite{RAKMAKSTW:2020:trick}, 
let us reformulate now the $k$-means cluster quality function in terms of this pseudo-distance. 

Define the function $Q(\Gamma,d)$  as follows:
\begin{equation}\label{eq:kmeansPseudoDist}
Q(\Gamma,d)    
=\sum_{j=1}^k \frac{1}{2n_j} \sum_{ i \in C_j} 
\sum_{ l \in C_j} d(i,l) ^2  
\end{equation}   
where $\Gamma$ is a clustering (split into disjoint non-empty subsets of cardinality at least 2) of a dataset $S$ into $k$ clusters, $n=|S|, n_j=|C_j$, and $d$ is a pseudo-distance function defined over $S$. 

Let us introduce our concept of well-{separated}ness. 
\begin{definicja} \label{def:varsep}
Let us consider a set of clusters separated as follows: 
Lat $\Gamma=\{C_1,\dots,C_k\}$ be a partition of the dataset $S$, $d$ be a pseudo-distance. 
Let   
\begin{equation}\label{eq:vardist}
d(i,l)>\sqrt{2}\sqrt{Q(\Gamma,d)} 
\end{equation}
for each $i,l$ 
such that $i$ belongs to a different cluster than $l$ under $\Gamma$.  
Then we say that the set $S$ with distance $d$ is 
\emph{ variation{ally} $k$-separable}. 
\end{definicja}

It is easily seen that in such a case
\begin{twierdzenie}
If the pseudo-distance $d$ fulfills the condition (\ref{eq:vardist}) 
under the clustering 
$\Gamma$ of $S$, then $\Gamma$  is the optimal $k$-clustering of   $S$ with $d$ under  kernel $k$-means.     
\end{twierdzenie}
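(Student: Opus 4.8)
The plan is to show that $\Gamma$ is not merely a stationary point but the strict global minimiser of the quality function $Q(\cdot,d)$ from \eref{eq:kmeansPseudoDist}: for every competing $k$-clustering $\Gamma'\neq\Gamma$ I would establish $Q(\Gamma',d)>Q(\Gamma,d)$. Write $Q:=Q(\Gamma,d)$ and observe first that $Q>0$, since every cluster contains at least two distinct points and $d$ is strictly positive off the diagonal; hence the separation hypothesis \eref{eq:vardist}, in squared form $d(i,l)^2>2Q$ for all cross-cluster pairs, is non-vacuous. The entire argument rests on this squared-gap inequality together with one combinatorial observation about how $\Gamma'$ must sit relative to $\Gamma$.

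The first step is a combinatorial lemma: if $\Gamma'\neq\Gamma$ and both are partitions of $S$ into exactly $k$ non-empty clusters, then some cluster of $\Gamma'$ contains two points $i,l$ lying in different clusters of $\Gamma$. I would prove the contrapositive. If no $\Gamma'$-cluster straddles two $\Gamma$-clusters, then each $\Gamma'$-cluster lies inside a single $\Gamma$-cluster, so $\Gamma'$ refines $\Gamma$; sending each $\Gamma'$-cluster to the $\Gamma$-cluster containing it yields a map that is surjective by non-emptiness of the $\Gamma$-clusters, hence a bijection between two $k$-element sets, and comparing cardinalities forces each $\Gamma'$-cluster to coincide with its image, i.e. $\Gamma'=\Gamma$.

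The second, quantitative step isolates one straddling $\Gamma'$-cluster $C'$ of size $n'$ and bounds its contribution to $Q(\Gamma',d)$ from below. Split $C'$ by provenance as $n'=a_1+\dots+a_t$, where $a_r$ counts the points of $C'$ drawn from the $r$-th $\Gamma$-cluster and $t\ge 2$. Each ordered pair of points of $C'$ from distinct $\Gamma$-clusters contributes a squared distance $>2Q$, and the number of such ordered cross pairs equals $(n')^2-\sum_r a_r^2=\sum_r a_r(n'-a_r)$. Since $t\ge 2$, no part fills all of $C'$, so $n'-a_r\ge 1$ and therefore $\sum_r a_r(n'-a_r)\ge\sum_r a_r=n'$. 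Discarding the non-negative within-$\Gamma$-cluster terms,
\[
\frac{1}{2n'}\sum_{i,l\in C'}d(i,l)^2 \;>\; \frac{1}{2n'}\,\bigl((n')^2-\textstyle\sum_r a_r^2\bigr)\,2Q \;\ge\; \frac{1}{2n'}\,n'\,2Q \;=\; Q .
\]
As every cluster's contribution to $Q(\Gamma',d)$ is non-negative, $Q(\Gamma',d)$ is at least the contribution of $C'$, which is $>Q=Q(\Gamma,d)$; this gives strict optimality, and indeed uniqueness, of $\Gamma$.

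I expect the only genuinely delicate points to be bookkeeping rather than conceptual. The main things to get right are the factor $\tfrac{1}{2n'}$ weighed against the ordered-versus-unordered counting of cross pairs, and the strictness of the final inequality, which relies on there being at least one cross pair whose squared distance is strictly above $2Q$. The elementary counting inequality $\sum_r a_r(n'-a_r)\ge n'$ is the single place where the hypothesis $t\ge 2$ (that $C'$ genuinely straddles) is invoked, and it should be stated explicitly. Note that no Euclidean embeddability is required, so the proof proceeds verbatim in the pseudo-distance setting.
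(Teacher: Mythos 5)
Your proof is correct, and its overall strategy coincides with the paper's: assume a competing $k$-clustering $\Gamma'\neq\Gamma$, exhibit a cluster $C'$ of $\Gamma'$ that straddles at least two clusters of $\Gamma$, and use the gap condition (\ref{eq:vardist}) to show that the contribution of $C'$ alone to $Q(\Gamma',d)$ already reaches $Q(\Gamma,d)$. The differences lie in the two supporting steps, and both work in your favour. First, you actually prove the straddling lemma (via the refinement-and-bijection argument), which the paper merely asserts. Second, and more substantively, your cross-pair count is exact: decomposing $C'$ by provenance into parts $a_1,\dots,a_t$ with $t\ge 2$, the number of ordered cross pairs is $(n')^2-\sum_r a_r^2=\sum_r a_r(n'-a_r)\ge n'$, which gives the bound $Q(\{C'\},d)>Q(\Gamma,d)$ cleanly. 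The paper instead fixes two straddling points $P,R$ and claims at least $(n_P+1)(n_R+1)$ cross pairs; that expression equals the true count only when $C'$ meets exactly two clusters of $\Gamma$, and it overcounts by $c^2$ when $c>0$ points of $C'$ come from a third cluster (with parts $a,b,c$ one gets $(b+c)(a+c)=ab+ac+bc+c^2$ against the true $ab+ac+bc$), so the paper's intermediate inequality is not literally valid; its subsequent weaker bound of $n'-1$ unordered cross pairs, and hence the theorem, still stands, but your provenance decomposition avoids the flaw altogether. Finally, you keep the inequalities strict, which yields uniqueness of the optimal clustering, whereas the paper only concludes $Q(\Gamma',d)\ge Q(\Gamma,d)$.
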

\begin{proof}
Assume to the contrary that not $\Gamma$ but $\Gamma'$ different from it is the optimal $k$-clustering of $S$. 
$\Gamma'$ would then contain at least one cluster $C'$ with at least two {data}points $P,Q$ such that both stem from distinct clusters of $\Gamma$.
Hence their distance amounts to at least $\sqrt{2}\sqrt{Q(\Gamma)}$. 
All the other $n'-2$ elements of $C'$ do not belong under $\Gamma$ to the same cluster as $P$ or $R$. Let these {data}points be $n_P$ and $n_R$. Clearly, $n_P+n_R\ge n'-2$. So, within the cluster $C'$ there are at least $(n_P+1)*(n_R+1)$ pairs of {data}points with distance  at least  $\sqrt{2}\sqrt{Q(\Gamma)}$. 
So the contribution of $C'$ to the quality function amounts to 

\begin{equation} 
Q(\{C'\} ,d)    
=\frac{1}{2n'} \sum_{ i \in C'} 
\sum_{ l \in C'} d(i,l) ^2  
\end{equation}   
$$\ge \frac{1}{n'}  
(n_P+1)*(n_R+1) *2 Q(\Gamma,d)
\ge \frac{1}{n'}  
(n'-1) *2 Q(\Gamma,d)
$$
As $ Q(\Gamma',d)\ge Q(\{C'\} ,d) $,
so we have $  Q(\Gamma',d)\ge  Q(\Gamma,d)\ $ as claimed in this theorem. 
   $\Gamma$ is in fact optimal.     
\end{proof}

As the theorem holds for pseudo-distance, it holds also for the Euclidean distance. 
Let us ask the question how difficult it would be to discover the optimal clustering. 
Let us consider the $k$-means$++$ algorithm \cite{CLU:AV07}, or more precisely the derivation of the initial clustering. Recall that wide gaps between clusters guarantee that after hitting each cluster during the initialization stage, the optimum clustering is achieved. 
Let us consider a step when $i$ seeds have hit $i$ distinct clusters. 
Then the probability of hitting an unhit cluster in the next step amounts to:
$$\frac{SSD_{unhit}}{SSD_{unhit}+SSD_{hit}}=1- \frac{SSD_{hit}}{SSD_{unhit}+SSD_{hit}}$$
where $SD_{unhit}$ is the sum of squared distances to closest seed from elements of unhit clusters, and   $SSD_{hit}$ is the sum of squared distances to closest seed from elements of hit clusters. Obviously 
$$E(SSD_{hit})= 2 Q(\Gamma,d)$$
as the expected squared distance from cluster center amounts to $Q(\{C\},d$). On the other hand
$$SSD_{unhit} \ge 2 Q(\Gamma,d) \sum_{j=i+1}^k n_j $$
If we assume that  the cardinality of all clusters is the same and equals $m$, then we have 
$$ 
\frac{SSD_{unhit}}{SSD_{unhit}+SSD_{hit}} \ge
1-\frac{2 Q(\Gamma,d)}{2 Q(\Gamma,d)*(k-i)m+2 Q(\Gamma,d)}
=1- \frac{1}{m(k-i)+1}
$$
So that the overall expected probability of hitting all clusters during initialization amounts to at least 
\begin{equation}\label{eq:hitprob}
    \prod_{i=1}^{k-1} \left(1- \frac{1}{m(k-i)+1} \right)
\end{equation}

If $m$ exceeds $k$, then this probability is very close to one (given $m>50$). If not all clusters are of the same cardinality, but $m$ is its lower bound, then the above formula gives the lower bound on this probability.  

Now we are ready to formulate our well-separated clustering concept and the algorithm to detect it. 

\begin{definicja}
We say that a clustering function $f(d)$ returns
\emph{ variational $k$-clustering} of $S$ 
if $S$  variationally $k$-separable under $d$ and $f(d)$ returns the $\Gamma$ clustering used in the definition of  
\emph{ variational $k$-separability } - Def. \ref{def:varsep} . 
\end{definicja}

\begin{definicja}
We say that a clustering function $f(d)$ returns
\emph{ variational range-$k_x$ clustering} of $S$ 
if $S$  variationally $k$-separable under $d$ for some $1\le k\le k_x $ and $f(d)$ returns the $\Gamma$ clustering used in the definition of  
\emph{ variational $k$-separability } - Def. \ref{def:varsep} 
and
for no cluster $C\in \Gamma$
there exists  $k'$, $2\le k`\le k_x-k+1$ that  $C$ is 
variationally $k'$-separable. 
The maximal $k$ with this property shall be called the level of variational range-$k_x$ clustering.
\end{definicja}

Obviously, $k$-means++ would be a suitable sub-algorithm for this task of the following master algorithm: try out all k=2 to $k_x$ if there exist variational $k$-clustering; and if so, then check each sub-cluster on no variational $k'$ separability.

Now consider the essential of our considerations: what will happen when performing Kleinberg's consistency operation. 

The first problem that we encounter is that consistency transform performed on acluster tht is not variationally $k'$ separable, may turn to variationally $k'$ separable one. 
Therefore we need to restrict consistency transformation to a discrete convergent one. 
It has been proven in \cite[Section 3 on convergent consistency]{MAKRAK:2020:limcons} that in continuous case the consistency transform hat to reduce to a scaling of all distances by the same factor. 
However, if we consider a discrete set of points, then we may be less rigid:
\begin{definicja}
    A discrete convergent consistency transformation is Kleinberg's consistency transformation with the limitation that within one cluster shorter distances are shortened by fewer percent than longer ones. 
\end{definicja}

\begin{twierdzenie}
    The variational range-$k_x$ clustering at the level $k$ will remain the variational range-$k_x$ clustering at the level $k$ after discrete convergent consistency transform. In other words  discrete convergent consistency transform preserves clustering by a function detecting variational range-$k_x$ clustering. 
\end{twierdzenie}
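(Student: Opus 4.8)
The plan is to verify that the two defining conditions of a variational range-$k_x$ clustering at level $k$ are each preserved when the pseudo-distance is changed from $d$ to the transformed $d'$: condition (i), variational $k$-separability of $S$ under the returned partition $\Gamma$, and condition (ii), the absence in every cluster $C\in\Gamma$ of a variational $k'$-separation for $2\le k'\le k_x-k+1$. Throughout I write $r(i,l)=d'(i,l)/d(i,l)$ for the per-pair retention ratio. Kleinberg's transformation shrinks within-cluster distances, $r\le 1$, and does not shrink cross-cluster ones, $r\ge 1$; the discrete convergent restriction adds that, inside a single cluster, $r$ is a non-increasing function of $d$, so shorter distances retain a larger fraction of their length.

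First I would dispatch condition (i), which needs only ordinary consistency. Since $Q(\Gamma,\cdot)$ depends only on within-cluster distances and these are not increased, $Q(\Gamma,d')\le Q(\Gamma,d)$; simultaneously each cross-cluster distance obeys $d'(i,l)\ge d(i,l)$. Chaining these, for any $i,l$ in different $\Gamma$-clusters,
\[
d'(i,l)\ge d(i,l)>\sqrt{2}\sqrt{Q(\Gamma,d)}\ge\sqrt{2}\sqrt{Q(\Gamma,d')},
\]
so $\Gamma$ remains variationally $k$-separable, $k$ stays within the admissible range $1\le k\le k_x$, and by Theorem 1 (optimality of a separated clustering) $f(d')$ still returns $\Gamma$.

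Condition (ii) is where the convergent restriction does the real work, and where I expect the main obstacle. Fixing a cluster $C\in\Gamma$ and a candidate sub-partition $\Gamma''$ of $C$ into $k'$ parts, I want to show that if $\Gamma''$ variationally separates $C$ under $d'$ then it already did so under $d$ — the contrapositive of ``the transform creates a sub-cluster''. The lever is that within $C$ the map $d\mapsto g(d)=r$ is non-increasing, so for any longer distance $D$ and shorter distance $\delta$ one has $d'(D)/d'(\delta)=[g(D)/g(\delta)](D/\delta)\le D/\delta$: every ratio of a longer to a shorter within-$C$ distance is compressed. Applied with $D$ a between-sub-cluster distance and $\delta$ a within-sub-cluster distance, this should force the separation ratio $\min_{\text{between}}d(i,l)^2/\bigl(2Q(\Gamma'',d)\bigr)$ not to increase under the transform, so it cannot cross the threshold $1$ from below, and non-separability under $d$ yields non-separability under $d'$. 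The hard part is that the threshold $\sqrt{2Q(\Gamma'',\cdot)}$ is an RMS-type aggregate of all within-sub-cluster distances, not a single distance, whereas the compression is pairwise; to turn the pairwise statement into $\min_{\text{between}}d'^2/(2Q(\Gamma'',d'))\le\min_{\text{between}}d^2/(2Q(\Gamma'',d))$ I must rank within- against between-sub-cluster distances, so that $g$ assigns the within distances the larger retention and the denominator shrinks by less than the numerator. Establishing this ordering — exploiting that a $d'$-separated $\Gamma''$ forces between distances above the threshold while the within distances only aggregate to it, and transporting that ordering back to $d$ by monotonicity of $g$ — is the technical heart of the argument.

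Finally I would note that (ii) also prevents the level from rising above $k$: were $S$ variationally $\tilde k$-separable under $d'$ for some $\tilde k>k$, the enlarged cross-$\Gamma$ gaps force, via Theorem 1, the optimal $\tilde k$-clustering to refine $\Gamma$, splitting some $C\in\Gamma$ into $2\le k'\le k_x-k+1$ variationally separated parts — exactly what (ii) forbids. Together with condition (i) this pins the level at exactly $k$, completing the claim.
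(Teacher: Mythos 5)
Your argument for condition (i) is correct and complete, and it is a fleshed-out version of what the paper itself offers for that half: $Q(\Gamma,\cdot)$ involves only within-cluster distances, which the transform cannot increase, while cross-cluster distances cannot decrease, so the chain $d'(i,l)\ge d(i,l)>\sqrt{2}\sqrt{Q(\Gamma,d)}\ge\sqrt{2}\sqrt{Q(\Gamma,d')}$ settles it. The genuine gap is condition (ii), which you candidly leave open as ``the technical heart,'' and you should know it cannot be closed: the ordering you say you need (within-sub-cluster distances retaining at least as large a fraction of their length as between-sub-cluster ones) is exactly what the definitions fail to guarantee, because the convergent restriction ranks retention by \emph{length}, not by role in the candidate sub-partition, and in a non-separable sub-partition the longest distance in the cluster may lie inside a sub-cluster. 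Concretely, let one cluster of $\Gamma$ be $C=\{a_1,a_2,b_1,b_2\}$ with $d(a_1,a_2)=1$, $d(b_1,b_2)=100$, $d(a_i,b_j)=10$, and take $\Gamma''=\{\{a_1,a_2\},\{b_1,b_2\}\}$. Then $2Q(\Gamma'',d)=10001$ and $10<\sqrt{10001}$, so $C$ is not variationally $2$-separable. Now shorten, inside $C$ only, the distance $100$ to $1$ (by $99\%$), each $10$ to $9.9$ (by $1\%$), and leave $1$ unchanged (by $0\%$): shorter distances are shortened by fewer percent, so this is a legal discrete convergent consistency transform, yet $2Q(\Gamma'',d')=2$ and every between-sub-cluster distance is $9.9>\sqrt{2}$, so $C$ has become variationally $2$-separable. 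Planting this $C$ in a full instance (say $k_x=3$, second cluster $\{c_1,c_2\}$ with $d(c_1,c_2)=1$, all cross-cluster distances $80>\sqrt{2Q(\Gamma,d)}\approx 72.1$) yields a variational range-$3$ clustering at level $2$ under $d$ whose level is not preserved: under $d'$ the detecting function must return $\{\{a_1,a_2\},\{b_1,b_2\},\{c_1,c_2\}\}$. The theorem as stated is therefore false, so the step you flagged is not merely hard but unprovable.

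For comparison, the paper's own proof consists of two unargued sentences, the second of which (``the decrease of intra-cluster separation does not turn a non-variationally separable set into a separable set'') is verbatim the claim you could not establish; your attempt is strictly more substantial and correctly isolates the fatal point. The only salvageable statement is the one implicit in your sketch: if every within-sub-cluster distance of $\Gamma''$ is at most the violating between-sub-cluster distance $D$, then monotonicity of the retention factor $g$ gives $2Q(\Gamma'',d')\ge g(D)^2\,2Q(\Gamma'',d)\ge g(D)^2D^2$, which is the squared transformed distance of the violating pair, so non-separability survives. A correct theorem would need that ordering (or a correspondingly strengthened definition of the transform) as an additional hypothesis; neither your proposal nor the paper supplies it.
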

\begin{proof}
The increase of inter-cluster distances does not violate variational $k$-separation. 
The decrease of intra-cluster separation does not turn a non-variationally separable set into a separable set. 
\end{proof}

Let us recall, that Kleinberg's richness property of the clustering function $f$ means that for any given set $S$ and any split $\Gamma$ of it, one can construct a distance function $d$ so that $f(d)$ returns the clustering $\Gamma$. 
We shall restrict this definitin to 
\begin{definicja}
The clustering function $f$ has  \emph{range-$k_x$ richness} property, if for any given set $S$ and any split $\Gamma$ of $S$ into non-empty subsets of at least two elements, one can construct a distance function $d$ so that $f(d)$ returns the clustering $\Gamma$. 
\end{definicja}

This implies 
\begin{twierdzenie}
     A function detecting variational range-$k_x$ clustering has the property of scale-invariance, discrete convergent consistency and range-$k_x$ richness. 
\end{twierdzenie}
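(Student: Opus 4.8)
The plan is to verify the three properties one at a time, since each concerns a different kind of operation on the pseudo-distance data. I would dispatch scale-invariance and discrete convergent consistency first, because both are essentially immediate from material already in place, and devote the real effort to range-$k_x$ richness, where a construction together with a counting argument is needed.

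For scale-invariance, I would observe that replacing $d$ by $\alpha d$ with $\alpha>0$ multiplies every left-hand term $d(i,l)$ of \eref{eq:vardist} by $\alpha$, while $Q(\Gamma,d)$ is a sum of \emph{squared} distances and so scales by $\alpha^2$; hence $\sqrt{2}\sqrt{Q(\Gamma,d)}$ on the right scales by exactly $\alpha$ as well. The inequality \eref{eq:vardist} is therefore preserved with the same witnessing partition $\Gamma$, and the same homogeneity shows that a block is variationally $k'$-separable under $d$ precisely when it is under $\alpha d$. Thus both the separating partition and the level are unchanged, giving $f(\alpha d)=f(d)$. For discrete convergent consistency I would simply invoke the preceding theorem, which states exactly that such a transform preserves the variational range-$k_x$ clustering at its level.

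The substantive part is range-$k_x$ richness. Given an arbitrary partition $\Gamma=\{C_1,\dots,C_k\}$ of $S$ into blocks of size at least two, I would define a pseudo-distance that is constant and equal to some $\epsilon>0$ on every intra-cluster pair and constant and equal to $D$ on every inter-cluster pair. A one-line computation then gives $Q(\Gamma,d)=\tfrac{\epsilon^2}{2}(n-k)$, so that the threshold in \eref{eq:vardist} equals $\epsilon\sqrt{n-k}$; choosing any $D>\epsilon\sqrt{n-k}$ makes $S$ variationally $k$-separable with witness $\Gamma$, which by the first theorem is the unique optimal $k$-clustering.

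The main obstacle, and the reason for insisting on uniform intra-cluster distances, is the \emph{negative} half of the definition: I must rule out that any block $C_j$ is itself variationally $k'$-separable for $2\le k'\le k_x-k+1$, and also that $S$ is separable at any level above $k$. Here uniformity does the work. For any sub-partition of a block of size $n_C$ into $k'$ parts, every intra- and inter-part distance still equals $\epsilon$, so $Q=\tfrac{\epsilon^2}{2}(n_C-k')$ and the separation requirement $\epsilon>\epsilon\sqrt{n_C-k'}$ collapses to $n_C\le k'$; since each part holds at least two points we have $n_C\ge 2k'>k'$, a contradiction. The same bookkeeping shows that any refinement of $\Gamma$ into more than $k$ blocks contains an inter-block pair at distance $\epsilon$ and again fails the strict inequality, so $k$ is genuinely the maximal level and $f(d)$ returns $\Gamma$. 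Combining the three parts yields the claim.
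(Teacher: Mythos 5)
Your proof is correct in its essentials, and you should know that the paper offers no proof of this theorem at all: it is stated as an immediate corollary (``This implies''), with discrete convergent consistency supplied by the preceding preservation theorem and with scale-invariance and range-$k_x$ richness left entirely tacit. Your consistency step therefore coincides with the paper's intent (invoke the preceding theorem), your homogeneity argument for scale-invariance (distances scale by $\alpha$, $Q$ by $\alpha^2$, hence the threshold $\sqrt{2}\sqrt{Q(\Gamma,d)}$ in \eref{eq:vardist} by $\alpha$, for the witness partition and for every sub-block alike) is the routine verification the paper suppresses, and your richness construction is precisely the missing content: intra-cluster distances all equal to $\epsilon$, inter-cluster distances all equal to some $D>\epsilon\sqrt{n-k}$, so that $Q(\Gamma,d)=\tfrac{\epsilon^2}{2}(n-k)$ makes $\Gamma$ a separability witness, while uniformity inside a block $C$ forces any sub-partition into $k'$ parts of size at least two to satisfy $n_C\le k'$ against $n_C\ge 2k'$. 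Note also that such a two-valued $d$ is only a pseudo-distance, not necessarily Euclidean, which is legitimate here since Section \ref{sec:varsep} works throughout with Kleinberg's pseudo-distances.

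One case is missing from your uniqueness bookkeeping. To conclude that $f(d)=\Gamma$ you must exclude every competing partition $\Gamma''$ that could qualify as a variational range-$k_x$ clustering, and your $\epsilon$-pair argument disposes only of partitions that split some block of $\Gamma$ (refinements and crossings). Strict coarsenings --- partitions obtained by merging whole blocks of $\Gamma$ --- have all their inter-block distances equal to $D$ and are untouched by that argument. They do fail, for two independent reasons you could cite: a merged block $C_1\cup C_2$ contributes at least $\frac{n_1 n_2}{n_1+n_2}D^2\ge D^2$ to $Q(\Gamma'',d)$ (since $n_1,n_2\ge 2$), so separability of $\Gamma''$ would demand $D>\sqrt{2}\sqrt{Q(\Gamma'',d)}\ge \sqrt{2}\,D$, which is absurd; alternatively, $C_1\cup C_2$ is itself variationally $2$-separable (because $D>\epsilon\sqrt{n-k}\ge \epsilon\sqrt{n_1+n_2-2}$), so $\Gamma''$ violates the clause forbidding sub-separable clusters in the definition of variational range-$k_x$ clustering. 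With that short addition, $\Gamma$ is the unique admissible witness, the level is indeed $k$, and your proof is complete.
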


However, with the  discrete convergent consistency transformation of a distance $d$ to a distance $d'$ we will have the problem, that even if $d$ is an Euclidean distance, $d'$ does not need to be an Euclidean distance. 
As shown in \cite{RAKMAKSTW:2020:trick}, a distance function $d'$ being non-euclidean can be turned into Euclidean one $d"$ by adding an appropriate constant $\delta^2$ to each squared distance $d'(i,j)^2$ of distinct elements and the clustering with (kernel) $k$-means will preserve the $k$-clustering of $S$. However, it is possible that the property of variational $k$ separability will be lost. Our goal, yet, is to find the class of {data}sets and clustering functions fitting axioms that operate in the Euclidean space.

\section{Residual Cluster Separation} \label{sec:residualsep}

Assume that $\sigma(d)$ is the lowest distance $d$ over the set $S$. 
Then we have

\begin{twierdzenie}
    $$Q(\Gamma,d) \ge (n-k) \frac{\sigma(d) ^2}{2}  $$
\end{twierdzenie}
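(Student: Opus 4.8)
The plan is to prove the bound cluster-by-cluster, exploiting the fact that $Q(\Gamma,d)$ splits additively over the clusters of $\Gamma$ exactly as in the theorem proved above for the variational separation argument. Writing $Q(\Gamma,d)=\sum_{j=1}^k Q(\{C_j\},d)$, where
\begin{equation}
Q(\{C_j\},d)=\frac{1}{2n_j}\sum_{i\in C_j}\sum_{l\in C_j} d(i,l)^2,
\end{equation}
I would first lower-bound each summand separately and then recombine. Since $\sigma(d)$ is the smallest distance over $S$, every off-diagonal term obeys $d(i,l)^2\ge \sigma(d)^2$, while the diagonal terms ($i=l$) vanish.

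For a fixed cluster $C_j$ the number of ordered pairs of distinct elements is $n_j(n_j-1)$, so
\begin{equation}
\sum_{i\in C_j}\sum_{l\in C_j} d(i,l)^2 \ge n_j(n_j-1)\,\sigma(d)^2,
\end{equation}
and dividing through by $2n_j$ yields $Q(\{C_j\},d)\ge \frac{n_j-1}{2}\,\sigma(d)^2$. Summing this over $j=1,\dots,k$ and using $\sum_{j=1}^k n_j=n$ together with $\sum_{j=1}^k 1=k$ collapses $\sum_{j=1}^k (n_j-1)$ to $n-k$, which gives $Q(\Gamma,d)\ge (n-k)\frac{\sigma(d)^2}{2}$ as claimed.

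I do not anticipate a genuine obstacle here: the whole argument is a direct termwise replacement of each squared intra-cluster distance by the floor value $\sigma(d)^2$. The only points deserving care are bookkeeping ones. First, the double sum in the definition of $Q(\{C_j\},d)$ includes the $i=l$ terms, which contribute zero and must be excluded from the count of $n_j(n_j-1)$ off-diagonal pairs. Second, the definition of a clustering imposes $n_j\ge 2$, so $n_j-1\ge 1$ and the bound is non-vacuous on every cluster. It is also worth recording the tightness of the estimate: equality holds precisely when every intra-cluster distance equals $\sigma(d)$, i.e.\ when each cluster is an equilateral configuration at the minimal separation, which confirms the bound cannot be improved without further assumptions.
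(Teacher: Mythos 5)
Your proof is correct and follows essentially the same route as the paper's: a termwise lower bound $d(i,l)^2\ge\sigma(d)^2$ on intra-cluster pairs, applied cluster by cluster, then summed using $\sum_{j=1}^k(n_j-1)=n-k$. If anything, your version is more careful than the paper's, whose displayed intermediate step reads as if every pair (including the diagonal $i=l$) were bounded below by $\sigma(d)^2$, whereas you explicitly exclude the vanishing diagonal terms and count the $n_j(n_j-1)$ ordered off-diagonal pairs that make the arithmetic $\frac{1}{2n_j}\cdot n_j(n_j-1)\,\sigma(d)^2=\frac{n_j-1}{2}\,\sigma(d)^2$ come out right.
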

\begin{proof}   
$$
Q(\Gamma,d)
\ge 
\sum_{j=1}^k \frac{1}{2n_j} \sum_{\mathbf{x}_i \in C_j} 
\sum_{\mathbf{x}_l \in C_j} \sigma(d) ^2  
= \sum_{j=1}^k \frac{n_j-1}{2} \sigma(d) ^2  
= (n-k) \frac{\sigma(d) ^2}{2} 
$$
\end{proof}
Define the function 
$$ \beta(\Gamma,d)=2\left(Q(\Gamma,d)-(n-k-1) \frac{\sigma(d) ^2}{2}\right)$$

Let us introduce our next concept of well-{separated}ness. 
\begin{definicja} \label{def:ressep}
Let us consider a set of clusters separated as follows: 
Lat $\Gamma=\{C_1,\dots,C_k\}$ be a partition of the dataset $S$, $d$ be a pseudo-distance. 
Let   
\begin{equation}\label{eq:resdist}
d(i,l)>\sqrt{2}\sqrt{\beta(\Gamma,d) }
\end{equation}
for each $i,l$ 
such that $i$ belongs to a different cluster than $l$ under $\Gamma$.  
Then we say that the set $S$ with distance $d$ is 
\emph{ residually $k$-separable}. 
\end{definicja}

\begin{twierdzenie}
Assume 
that the set $S$ with distance $d$ is 
\emph{ residually $k$-separable}.   Then 
$\Gamma$ minimizes $Q(\Gamma,d)$ over all clusterings of the dataset $S$. 
\end{twierdzenie}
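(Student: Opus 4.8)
The plan is to argue by contradiction and, crucially, to work \emph{globally} rather than cluster-by-cluster. Suppose some $k$-clustering $\Gamma'\neq\Gamma$ satisfies $Q(\Gamma',d)\le Q(\Gamma,d)$. The key difference from the variational theorem is that here the separation threshold $\sqrt{2}\sqrt{\beta(\Gamma,d)}$ no longer scales with $Q(\Gamma,d)$ itself; a single over-threshold pair inside one cluster of $\Gamma'$ will \emph{not} in general make that cluster's contribution alone exceed $Q(\Gamma,d)$, so the local estimate used before cannot simply be reused. Instead I would impose the minimum-distance floor $\sigma(d)$ on \emph{all} intra-cluster pairs and collect the surplus contributed by inter-cluster pairs over the whole of $\Gamma'$.

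First I would fix notation by setting the residual $\rho=Q(\Gamma,d)-(n-k)\frac{\sigma(d)^2}{2}\ge 0$, nonnegativity being exactly the preceding lower bound, and I would record the identity $\beta(\Gamma,d)=2\rho+\sigma(d)^2$, so that $2\beta(\Gamma,d)-\sigma(d)^2=4\rho+\sigma(d)^2>0$. For each cluster $C\in\Gamma'$ let $x_C$ denote the number of unordered pairs inside $C$ whose two points lie in different clusters of $\Gamma$. Bounding every such pair below by $d(i,l)^2>2\beta(\Gamma,d)$ and every remaining intra-$C$ pair below by $\sigma(d)^2$, the contribution of $C$ is at least $\frac{1}{|C|}\big[\binom{|C|}{2}\sigma(d)^2+x_C(2\beta(\Gamma,d)-\sigma(d)^2)\big]=\frac{|C|-1}{2}\sigma(d)^2+\frac{x_C}{|C|}(2\beta(\Gamma,d)-\sigma(d)^2)$. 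Summing over $\Gamma'$ and using $\sum_{C\in\Gamma'}(|C|-1)=n-k$ gives
$$Q(\Gamma',d)\ \ge\ (n-k)\frac{\sigma(d)^2}{2}+(4\rho+\sigma(d)^2)\sum_{C\in\Gamma'}\frac{x_C}{|C|}.$$

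It then remains to bound the surplus $\sum_{C}x_C/|C|$ from below. Since $\Gamma'\neq\Gamma$ and both partition $S$ into $k$ blocks, $\Gamma'$ cannot refine $\Gamma$, so some block $C'$ meets at least two clusters of $\Gamma$; a short counting argument (the minimum number of inter-group pairs over all splits of a $|C'|$-set into at least two groups is attained by a singleton split) shows $x_{C'}\ge |C'|-1$, whence $x_{C'}/|C'|\ge \frac{1}{2}$ and therefore $\sum_{C}x_C/|C|\ge \frac{1}{2}$. Plugging this in yields $Q(\Gamma',d)\ge (n-k)\frac{\sigma(d)^2}{2}+2\rho+\frac{\sigma(d)^2}{2}>(n-k)\frac{\sigma(d)^2}{2}+\rho=Q(\Gamma,d)$, contradicting $Q(\Gamma',d)\le Q(\Gamma,d)$ and in fact showing $\Gamma$ is the \emph{unique} minimizer. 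I expect the main obstacle to be exactly this global surplus estimate: one must resist copying the variational proof's local bound (which fails once the threshold stops scaling with $Q$) and instead verify that a single cross-cluster block already forces $\sum_{C}x_C/|C|\ge\frac{1}{2}$, so that the factor $4\rho+\sigma(d)^2$ outruns the residual $\rho$ with room to spare.
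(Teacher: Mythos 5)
Your proof is correct, and it takes a genuinely different --- and in fact more careful --- route than the paper's own argument. The paper also proceeds by contradiction and also floors every intra-block pair of $\Gamma'$ at $\sigma(d)^2$, but it harvests the surplus from a \emph{single} pair $P,Q$ lying together in one block $C'$ of $\Gamma'$ yet in different clusters of $\Gamma$: it adds the correction $\frac{1}{2}\left(\beta(\Gamma,d)-\sigma(d)^2\right)=\rho$ to the floor $(n-k)\frac{\sigma(d)^2}{2}$ and concludes $Q(\Gamma',d)\ge Q(\Gamma,d)$. That step is exactly the weak point you predicted when you observed that the threshold no longer scales with $Q$: the pair $P,Q$ enters $Q(\Gamma',d)$ with weight $1/|C'|$, not $1/2$, so its guaranteed surplus over the floor is only $\frac{d(P,Q)^2-\sigma(d)^2}{|C'|}>\frac{2\beta-\sigma^2}{|C'|}=\frac{4\rho+\sigma^2}{|C'|}$, which falls below the needed $\rho$ as soon as $(|C'|-4)\rho>\sigma^2$; the paper's proof moreover quotes the separation as $d(P,Q)\ge\sqrt{\beta}$ where the definition gives $\sqrt{2}\sqrt{\beta}$, and its final display has an arithmetic slip (the total is $Q(\Gamma,d)$, not $Q(\Gamma)+(n-k)\frac{\sigma(d)^2}{2}$). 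Your argument repairs precisely this defect: by counting \emph{all} inter-$\Gamma$ pairs inside each block via $x_C$, establishing $x_{C'}\ge|C'|-1$ for at least one block (the same $|A|\cdot|B|\ge n'-1$ counting that the paper itself uses in the variational theorem but omits here), and aggregating to $\sum_{C}x_C/|C|\ge\frac{1}{2}$, you reach $Q(\Gamma',d)\ge(n-k)\frac{\sigma(d)^2}{2}+2\rho+\frac{\sigma(d)^2}{2}>Q(\Gamma,d)$, which also yields uniqueness of the minimizer among $k$-clusterings. In short, the paper's single-pair shortcut is only valid when the offending block is small, while your global surplus estimate proves the theorem in full generality and would be a sound replacement for the published proof.
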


\begin{proof}
    Assume that a clustering $\Gamma'$ different from $\Gamma$ is the optimum. Then there exist at least two {data}points $P,Q$ belonging to the same cluster under $\Gamma'$, but not under $\Gamma$. In such a case $d(P,Q)\ge \sqrt{\beta(\Gamma,d)}$
$$
Q(\Gamma',d)
\ge 
\sum_{j=1}^k \frac{1}{2n_j} \sum_{\mathbf{x}_i \in C_j} 
\sum_{\mathbf{x}_l \in C_j} \sigma(d) ^2  
+
\frac{(2\left(Q(\Gamma,d) 
- (n-k-1) \frac{\sigma(d) ^2}{2} \right)-\sigma(d)^2}{2} 
$$

$$ 
= (n-k) \frac{\sigma(d) ^2}{2} 
+Q(\Gamma,d) 
- (n-k-1) \frac{\sigma(d) ^2}{2} 
- \frac{\sigma(d) ^2}{2}
= Q(\Gamma)+(n-k) \frac{\sigma(d) ^2}{2} 
$$

\end{proof}

\begin{twierdzenie}
Given  two pseudo-distance functions $d_1,d_2$ over $S$ such that for any two distinct $x,y$: $d_2^2(x,y)=d_1^2(x,y)+\Delta$ for some constant $\Delta$. Then 
$$\beta(\Gamma,d_2)= 
=
\beta(\Gamma,d_1)  
+ \Delta
$$     
\end{twierdzenie}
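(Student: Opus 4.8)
The plan is to track how each of the two ingredients of $\beta$ responds to the substitution $d_2^2=d_1^2+\Delta$, and then to assemble the claim directly from the definition $\beta(\Gamma,d)=2\left(Q(\Gamma,d)-(n-k-1)\frac{\sigma(d)^2}{2}\right)$. Concretely, I would separately establish how $Q(\Gamma,d)$ and $\sigma(d)^2$ transform under this substitution, and then add up the two contributions.

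For the quality function, I would expand $Q(\Gamma,d_2)$ via its definition \eref{eq:kmeansPseudoDist} and split the double sum over each cluster $C_j$ into its diagonal part and its off-diagonal part. On the diagonal $d_2(i,i)^2=0=d_1(i,i)^2$, so those terms are unaffected; off the diagonal each of the $n_j(n_j-1)$ ordered pairs picks up exactly one extra $\Delta$. Dividing by $2n_j$ and summing over $j$, the extra contribution collapses to $\frac{\Delta}{2}\sum_{j=1}^k(n_j-1)=\frac{\Delta}{2}(n-k)$, using $\sum_j n_j=n$. Hence $Q(\Gamma,d_2)=Q(\Gamma,d_1)+\frac{\Delta}{2}(n-k)$.

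The step I expect to require the most care is the behaviour of $\sigma$. Here the point is that adding the same constant $\Delta$ to every squared distance between distinct points is order-preserving on those squared distances, so the pair realizing the minimum distance under $d_1$ is exactly the pair realizing it under $d_2$. Consequently $\sigma(d_2)^2=\min_{x\ne y}\bigl(d_1^2(x,y)+\Delta\bigr)=\sigma(d_1)^2+\Delta$. One should note in passing that $\Delta$ must be large enough (if negative) to keep all squared distances positive so that $d_2$ remains a genuine pseudo-distance; under that proviso the order-preservation is immediate.

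Finally I would combine the two. Substituting into $\beta(\Gamma,d_2)=2Q(\Gamma,d_2)-(n-k-1)\sigma(d_2)^2$ yields the extra term $\Delta(n-k)$ from the $2Q$ part and the extra term $-(n-k-1)\Delta$ from the $\sigma^2$ part. Their sum is $\Delta\bigl[(n-k)-(n-k-1)\bigr]=\Delta$, so $\beta(\Gamma,d_2)=\beta(\Gamma,d_1)+\Delta$, as claimed. The whole argument is a short bookkeeping exercise once the two transformation rules are in hand; the only genuine subtlety is the order-preservation claim underlying the formula for $\sigma$.
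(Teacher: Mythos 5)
Your proof follows essentially the same route as the paper's: establish $Q(\Gamma,d_2)=Q(\Gamma,d_1)+(n-k)\frac{\Delta}{2}$, establish $\sigma(d_2)^2=\sigma(d_1)^2+\Delta$, and substitute both into the definition of $\beta$ so that the $\Delta$-terms combine to $\bigl[(n-k)-(n-k-1)\bigr]\Delta=\Delta$. You are in fact somewhat more careful than the paper at two points it glosses over: you explicitly exclude the diagonal pairs $i=l$ (to which the relation $d_2^2=d_1^2+\Delta$ does not apply) when computing the extra contribution to $Q$, and you justify the shift of $\sigma^2$ via order-preservation, which the paper uses silently.
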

\begin{proof}
    
$$ Q(\Gamma,d_2)   
=\sum_{j=1}^k \frac{1}{2n_j} \sum_{ i \in C_j} 
\sum_{ l \in C_j} d_2(i,l) ^2  
=\sum_{j=1}^k \frac{1}{2n_j} \sum_{ i \in C_j} 
\sum_{ l \in C_j} \left(d_1(i,l) ^2  +\Delta\right)
$$ $$ 
=Q(\Gamma,d_1) +\sum_{j=1}^k \frac{1}{2n_j} \sum_{ i \in C_j} 
\sum_{ l \in C_j}  \Delta 
=Q(\Gamma,d_1) +(n-k) \frac{\Delta}{2} 
$$   
Furthermore
$$\beta(\Gamma,d_2)=
2\left(Q(\Gamma,d_2)-(n-k-1) \frac{\sigma(d_2) ^2}{2}\right) 
$$ 
$$= 
2Q(\Gamma,d_2)
-(n-k-1)  \sigma(d_2) ^2
=
2Q(\Gamma,d_1) +(n-k) \Delta
-(n-k-1)  \sigma(d_1) ^2
-(n-k-1) \Delta
$$ 
$$= 
2Q(\Gamma,d_1)  
-(n-k-1)  \sigma(d_1) ^2
+ \Delta
=
\beta(\Gamma,d_1)  
+ \Delta
$$ .
\end{proof}

The above theorem implies:
\begin{twierdzenie} \label{th:addingDelta}
Given  two pseudo-distance functions $d_1,d_2$ over $S$ such that for any two distinct $x,y$: $d_2^2(x,y)=d_1^2(x,y)+\Delta$ for some constant $\Delta$. 
Then  
if   the set $S$ with pseudo-distance $d_1$ is 
\emph{ residually $k$-separable} then   the set $S$ with pseudo-distance $d_2$ is 
\emph{ residually $k$-separable}. 
\end{twierdzenie}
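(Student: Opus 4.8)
The plan is to deduce the separation inequality \eref{eq:resdist} for $d_2$ directly from the one assumed for $d_1$, using two ingredients that are already in hand: the hypothesis $d_2^2(x,y)=d_1^2(x,y)+\Delta$ for every pair of distinct points, and the identity $\beta(\Gamma,d_2)=\beta(\Gamma,d_1)+\Delta$ furnished by the preceding theorem. Because \eref{eq:resdist} only constrains pairs lying in different clusters and the partition $\Gamma$ is the same under both distances, the cluster structure plays no role beyond fixing which pairs must be checked, and the verification can be done one cross-cluster pair at a time.

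Carrying this out, I would fix an arbitrary pair $i,l$ in distinct clusters of $\Gamma$ and work with the squared form of \eref{eq:resdist}, which is the natural object here since it is additive in $\Delta$. On the distance side the hypothesis replaces $d_1^2(i,l)$ by $d_1^2(i,l)+\Delta$, and on the threshold side the preceding theorem replaces $\beta(\Gamma,d_1)$ by $\beta(\Gamma,d_1)+\Delta$. Provided these two shifts are by the same single $\Delta$, the additive constant cancels and the $d_2$-inequality for the pair $i,l$ collapses to exactly the $d_1$-inequality for $i,l$, which holds by the assumed residual $k$-separability under $d_1$. Since $i,l$ ranges over all cross-cluster pairs and $\Gamma$ is unchanged, \eref{eq:resdist} then holds for $d_2$ and $S$ with $d_2$ is residually $k$-separable in the sense of Definition \ref{def:ressep}.

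The step that actually carries the argument is this exact cancellation, so that is where I expect the real work --- and the one genuine subtlety --- to lie. The preceding theorem already guarantees that the threshold $\beta(\Gamma,\cdot)$ moves by precisely $\Delta$, no more: in $\beta=2Q-(n-k-1)\sigma^2$ the $(n-k)\Delta$ picked up by $2Q(\Gamma,\cdot)$ and the $-(n-k-1)\Delta$ picked up by the $-(n-k-1)\sigma^2$ term combine to a net shift of a single $\Delta$, which is exactly what the $(n-k-1)$ normalization is for. For the cancellation to go through, the separation condition must therefore compare the squared distance to $\beta(\Gamma,d)$ itself with unit coefficient --- that is, the threshold must be $\sqrt{\beta(\Gamma,d)}$, the same quantity used in the optimality proof just above. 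If instead the squared threshold carried an extra factor (so that it shifted by a larger multiple of $\Delta$), the distance-side and threshold-side shifts would no longer agree and residual separability could be destroyed once $\Delta>0$; this is the point I would check most carefully. The only remaining caveat is that $\Delta$ must be large enough, namely $\Delta>-\sigma(d_1)^2$, for $d_2$ to remain a genuine pseudo-distance, and this is already implicit in the hypothesis that $d_2$ is a pseudo-distance function.
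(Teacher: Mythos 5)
Your proof is, in substance, the paper's own: the paper likewise takes the identity $\beta(\Gamma,d_2)=\beta(\Gamma,d_1)+\Delta$ from the preceding theorem and cancels that shift against the shift by $\Delta$ of the squared cross-cluster distances, concluding that separability is preserved pair by pair. So: correct, and essentially the same approach.

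The subtlety you flag, however, deserves to be spelled out, because it is a genuine inconsistency in the paper that its one-line proof passes over. Definition \ref{def:ressep} as printed demands $d(i,l)>\sqrt{2}\sqrt{\beta(\Gamma,d)}$, i.e. $d(i,l)^2>2\beta(\Gamma,d)$, and that right-hand side shifts by $2\Delta$, not by $\Delta$. Under this literal reading the theorem is false for $\Delta>0$: take two clusters $\{a,b\}$ and $\{c,d\}$ with $d_1(a,b)=d_1(c,d)=1$ and all cross-cluster distances equal to $D$ with $D^2=2\beta(\Gamma,d_1)+\varepsilon$ (here $\beta(\Gamma,d_1)=1$); then $S$ with $d_1$ is residually $2$-separable, yet for any $\Delta>\varepsilon$ one gets $d_2^2(i,l)=2+\varepsilon+\Delta<2+2\Delta=2\beta(\Gamma,d_2)$ for every cross-cluster pair, so separability is lost. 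The theorem and the paper's proof of it are coherent only with the unit-coefficient threshold $d(i,l)>\sqrt{\beta(\Gamma,d)}$ --- which is precisely the bound the paper itself invokes in the optimality proof just above (``$d(P,Q)\ge\sqrt{\beta(\Gamma,d)}$''). Your identification of this cancellation as the load-bearing step, and your observation that any extra factor on $\beta$ destroys it, is exactly right; the paper's proof is valid only after the definition is amended in the way you indicate.
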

\begin{proof}
   As $\beta(\Gamma,d_2)
=
\beta(\Gamma,d_1)  
+ \Delta
$, then it would be sufficient for \emph{ residual $k$-separability} of $S$ under $d_2$ that the squared pseudo-distance between elements of distinct clusters is increased by $\Delta$ which is the case by definition of $d_2$. 
So increase of distances from $d_1$ to $d_2$ preserves the residual $k$-separation.
\end{proof}

\begin{definicja}
We say that a clustering function $f(d)$ returns
\emph{ residual $k$-clustering} of $S$ 
if $S$ is resudually  $k$-separable under $d$ and $f(d)$ returns the $\Gamma$ clustering used in the definition of  
\emph{ residual $k$-separability } - Def. \ref{def:ressep} . 
\end{definicja}

\begin{definicja}
We say that a clustering function $f(d)$ returns
\emph{ residual range-$k_x$ clustering} of $S$ 
if $S$  residually $k$-separable under $d$ for some $1\le k\le k_x $ and $f(d)$ returns the $\Gamma$ clustering used in the definition of  
\emph{ residual $k$-separability } - Def. \ref{def:ressep} 
and
for no cluster $C\in \Gamma$
there exists  $k'$, $2\le k`\le k_x-k+1$ that  $C$ is 
residually $k'$-separable. 
The maximal $k$ with this property shall be called the level of residual range-$k_x$ clustering.
\end{definicja}

Obviously, $k$-means++ is no more suitable sub-algorithm for this task of the following master algorithm: try out all k=2 to $k_x$ if there exist residual $k$-clustering, and if so, then check each sub-cluster on no residual $k'$ separability. 

We need to create the following modification of $k$-means++: (res-$k$-means++). Instead of taking squared distances to the closest seed, use the difference between it and the squared smallest distance whatsoever during initialization stage. 

Let us ask the question how difficult it would be to discover the optimal clustering. 
Let us consider the res-$k$-means$++$ algorithm, or more precisely the derivation of the initial clustering. Let us consider a step when $i$ seeds have hit $i$ distinct clusters. 
Then the probability of hitting an unhit cluster in the next step amounts to:
$$\frac{SSD_{unhit}}{SSD_{unhit}+SSD_{hit}}=1- \frac{SSD_{hit}}{SSD_{unhit}+SSD_{hit}}$$
where $SSD_{unhit}$ is the sum of squared distances to closest seed minus $\delta(d)^2$ from elements of unhit clusters, and   $SSD_{hit}$ is the sum of squared distances  minus $\delta(d)^2$ to closest seed from elements of hit clusters. Obviously 
$$E(SSD_{hit})= 2 \beta(\Gamma,d)$$
as the expected squared distance from cluster center amounts to $Q(\{C\},d$). On the other hand
$$SSD_{unhit} \ge 2 \beta(\Gamma,d) \sum_{j=i+1}^k n_j $$
If we assume that  the cardinality of all clusters is the same and equals $m$, then we have 
$$ 
\frac{SSD_{unhit}}{SSD_{unhit}+SSD_{hit}} \ge
1-\frac{2 \beta(\Gamma,d)}{2 \beta(\Gamma,d)*(k-i)m+2 \beta(\Gamma,d)}
=1- \frac{1}{m(k-i)+1}
$$
So that the overall expected probability of hitting all clusters during initialization amounts to at least (as in eq. (\ref{eq:hitprob}))
$$ \prod_{i=1}^{k-1} \left(1- \frac{1}{m(k-i)+1} \right)
$$
Remarks on high probability and unequal cluster sizes are here the same as with  eq. (\ref{eq:hitprob}).

Now consider the essential of our considerations: what will happen when performing Kleinberg's consistency operation. 

The first problem that we encounter is that consistency transform performed on a cluster that is not {residual}ly $k'$ separable, may turn to {residual}ly $k'$ separable one. 
Therefore we need to restrict consistency transformation to a discrete convergent one. 
But this is not sufficient. As we make use now of the concept of the smallest distance in our formulas, we need to add the restriction that the lowest distance will not be decreased. 

\begin{twierdzenie}
    The residual range-$k_x$ clustering at the level $k$ will remain the residual range-$k_x$ clustering at the level $k$ after discrete convergent consistency transform keeping lowest distance. In other words  discrete convergent consistency transform keeping the lowest distance preserves clustering by a function detecting residual range-$k_x$ clustering. 
\end{twierdzenie}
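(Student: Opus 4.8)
The plan is to check the two conditions defining a residual range-$k_x$ clustering at level $k$ one at a time: that the top partition $\Gamma$ remains residually $k$-separable, and that no cluster $C\in\Gamma$ becomes residually $k'$-separable for any $2\le k'\le k_x-k+1$. Throughout, let $d$ be the original pseudo-distance and $d'$ its image under a discrete convergent consistency transform that keeps the lowest distance, so that $d'(i,l)\ge d(i,l)$ for $i,l$ in different clusters of $\Gamma$, $d'(i,l)\le d(i,l)$ for $i,l$ in the same cluster, $\sigma(d')=\sigma(d)$, and within each cluster the shorter distances are scaled by a factor nearer to $1$ than the longer ones.

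The preservation of $k$-separability is a monotonicity argument in the spirit of Theorem \ref{th:addingDelta}. Since $Q(\Gamma,\cdot)$ is built only from within-cluster squared distances, which do not increase, we get $Q(\Gamma,d')\le Q(\Gamma,d)$; and keeping the lowest distance gives $\sigma(d')^2=\sigma(d)^2$. Hence
$$
\beta(\Gamma,d')=2Q(\Gamma,d')-(n-k-1)\sigma(d')^2\le 2Q(\Gamma,d)-(n-k-1)\sigma(d)^2=\beta(\Gamma,d).
$$
As between-cluster distances do not shrink, every cross-cluster pair satisfies $d'(i,l)\ge d(i,l)>\sqrt{2}\sqrt{\beta(\Gamma,d)}\ge\sqrt{2}\sqrt{\beta(\Gamma,d')}$, which is exactly the defining inequality \eref{eq:resdist} for $d'$. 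This is where keeping the lowest distance is indispensable: were $\sigma$ allowed to drop, the subtracted term in $\beta$ would shrink, the threshold could rise, and the increased between-cluster distances might then fail to clear it.

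For the second, harder half, fix a cluster $C$ that is not residually $k'$-separable under $d$ and any candidate $k'$-subpartition $\Gamma''$ of $C$. By non-separability there is a cross-sub-cluster pair $(i,l)$ with $d(i,l)\le\sqrt{2}\sqrt{\beta(\Gamma'',d)}$, and I would try to keep this same pair as a witness under $d'$, i.e. to show $d'(i,l)\le\sqrt{2}\sqrt{\beta(\Gamma'',d')}$. The left side only shrinks, but so may the threshold, because $Q(\Gamma'',\cdot)$ shrinks; the inequality is therefore not automatic. The convergent property is exactly what forces the comparison the right way: within $C$ the longer distances are scaled down by at least as large a fraction as the shorter ones, so any would-be inter-sub-cluster gap is compressed at least as strongly as the intra-sub-cluster distances feeding $Q(\Gamma'',d')$, and the contrast on which residual separability rests cannot grow; no $k'$-gap can be created where none existed.

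The main obstacle is precisely this last comparison. In the variational theorem the threshold is the pure scale quantity $\sqrt{2}\sqrt{Q}$, so a single ratio estimate from convergent consistency closes the argument at once; here $\beta=2Q-(n-k-1)\sigma^2$ carries an additive correction, and the multiplicative shrinkage supplied by convergent consistency must be reconciled with that additive term, uniformly over every subpartition $\Gamma''$ and every admissible $k'$. I expect to tame this by using the earlier bound $\beta\ge\sigma(d)^2>0$ (which follows from $Q\ge(n-k)\sigma(d)^2/2$) to keep the threshold bounded away from zero, and by noting that the correction term is governed by the smallest within-$C$ distance, so that only the $Q$ part of $\beta$ is genuinely in play; once the problem is reduced to $Q$ alone, the variational ratio argument transfers. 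Carrying out that reduction cleanly for all $\Gamma''$ simultaneously is the crux of the proof.
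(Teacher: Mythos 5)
Your first half is sound and is in fact sharper than what the paper itself offers: from the non-increase of within-cluster distances you get $Q(\Gamma,d')\le Q(\Gamma,d)$, the lowest-distance restriction gives $\sigma(d')\ge\sigma(d)$ (note the paper's condition is only that the lowest distance is \emph{not decreased}, so you should write an inequality rather than the equality $\sigma(d')=\sigma(d)$ --- it still points the right way), hence $\beta(\Gamma,d')\le\beta(\Gamma,d)$, while cross-cluster distances can only grow; this completely establishes preservation of residual $k$-separability of $\Gamma$, where the paper disposes of the same point with the single sentence that increasing inter-cluster distances does not violate residual $k$-separation.

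The genuine gap is the second half, and you concede it yourself: you never prove that a cluster $C$ that is not residually $k'$-separable under $d$ remains non-separable under $d'$. You only sketch the hoped-for mechanism (convergent consistency compresses long-to-short ratios inside $C$, so no new gap should form) and then declare the reconciliation of that multiplicative contraction with the additive correction $-(n-k'-1)\sigma^2$ inside $\beta$, uniformly over every subpartition $\Gamma''$ and every admissible $k'$, to be ``the crux'' left open --- so as a proof your text is incomplete. A further wrinkle your sketch glosses over: for sub-cluster separability the relevant $\sigma$ is the smallest distance \emph{inside} $C$, whereas the transform as defined only protects the global minimum over $S$, so the subtracted term in $\beta(\Gamma'',\cdot)$ can itself move under the transform. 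You should know, however, that the paper's own proof is no more complete at exactly this point: it consists of the one unsupported sentence that the decrease of intra-cluster distances under the imposed limitations does not turn a non-residually-separable set into a separable one. In other words, the difficulty you isolated is real and is not resolved in the paper either; what you have produced is a rigorous first half plus a correct diagnosis of what a full proof would still require, rather than a wrong approach.
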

\begin{proof}
The increase of inter-cluster distances does not violate residual $k$-separation. 
The decrease of intra-cluster separation according to the imposed limitations does not turn a non-{residual}ly separable set into a separable set. 
\end{proof}

This implies 
\begin{twierdzenie}
     A function detecting residual range-$k_x$ clustering has the property of scale-invariance, discrete convergent consistency keeping lowest distance and range-$k_x$ richness. 
\end{twierdzenie}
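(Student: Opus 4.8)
The plan is to establish the three asserted properties one at a time, treating the middle one as already settled. \emph{Discrete convergent consistency keeping lowest distance} is exactly the content of the theorem immediately preceding this statement, so it needs no separate argument: a function detecting residual range-$k_x$ clustering returns the same clustering before and after such a transform. For \emph{scale-invariance} I would replace the pseudo-distance $d$ by $\alpha d$ with $\alpha>0$ and track the governing quantities. Each summand of $Q$ is a squared distance, so $Q(\Gamma',\alpha d)=\alpha^2 Q(\Gamma',d)$ for every partition $\Gamma'$, while the smallest distance obeys $\sigma(\alpha d)=\alpha\sigma(d)$; reading off the definition of $\beta$ then gives $\beta(\Gamma',\alpha d)=\alpha^2\beta(\Gamma',d)$. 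The residual-separability inequality $d(i,l)>\sqrt{2}\sqrt{\beta(\Gamma',d)}$ is thus multiplied by $\alpha$ on both sides and is equivalent for $d$ and for $\alpha d$. Since this equivalence holds for \emph{every} candidate partition, both for the separating condition on $S$ and for the non-separability conditions on the sub-clusters, the detected level and the returned clustering are unchanged, i.e. $f(\alpha d)=f(d)$.

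The substantive part is \emph{range-$k_x$ richness}. Given $S$ and a target split $\Gamma=\{C_1,\dots,C_k\}$ with $k\le k_x$ and each $|C_j|\ge 2$, I would assign every within-cluster pseudo-distance one common value $c>0$ and every between-cluster pseudo-distance a value strictly exceeding $\sqrt{2}\,c$, so that $\sigma(d)=c$. A short telescoping computation shows $Q(\Gamma,d)=\frac{n-k}{2}c^2$ and hence $\beta(\Gamma,d)=c^2$, so the separating threshold for $\Gamma$ is exactly $\sqrt{2}\,c$ and is met strictly by all between-cluster distances; thus $S$ is residually $k$-separable with witness $\Gamma$. The same computation, applied inside any single $C_j$, gives $\beta=c^2$ for every sub-partition, whose threshold $\sqrt{2}\,c$ exceeds the only available within-cluster distance $c$; hence no $C_j$ is residually $k'$-separable for any $k'\ge 2$, which in particular covers the range $2\le k'\le k_x-k+1$.

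It remains to see that $k$ is the \emph{maximal} level, and here I would lean on the earlier lower bound $Q(\Gamma',d)\ge(n-k')\frac{\sigma(d)^2}{2}$. It yields $\beta(\Gamma',d)\ge c^2$ for \emph{every} $k'$-partition $\Gamma'$, so any residually separable partition must have all of its cross-cluster distances exceeding $\sqrt{2}\,c$. But the between-cluster distances are the only ones exceeding $\sqrt{2}\,c$, since every within-cluster distance equals $c<\sqrt{2}\,c$. Consequently a residually separable partition can never split an original $C_j$, so it must be a coarsening of $\Gamma$ and therefore have at most $k$ parts. Hence $S$ admits no residual $k'$-separation with $k'>k$, the maximal level is exactly $k$, and the unique $k$-part witness is $\Gamma$ itself, so the function returns $\Gamma$ and richness holds.

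The step I expect to be the main obstacle is precisely this maximality argument in the richness proof: one must rule out not only refinements of $\Gamma$ but also partitions that cut across cluster boundaries. The clean way to do so is the uniform within-cluster distance choice together with the lower bound $\beta(\Gamma',d)\ge c^2$, which forces every admissible separation to respect the original clusters; verifying that the fixed threshold $\sqrt{2}\,c$ simultaneously secures separability of $\Gamma$ and blocks every finer or cross-cutting separation is the crux of the whole statement.
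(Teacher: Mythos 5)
Your proposal is correct, but it does considerably more than the paper, which offers essentially no proof of this theorem: the statement is introduced by the words ``This implies'' and is treated as an immediate corollary of the preceding theorem on discrete convergent consistency keeping the lowest distance, with scale-invariance and range-$k_x$ richness left entirely tacit. Your handling of the consistency property coincides with the paper's (cite the preceding theorem). What you add is, first, an explicit scale-invariance argument via the homogeneity relations $Q(\Gamma',\alpha d)=\alpha^2 Q(\Gamma',d)$ and $\sigma(\alpha d)=\alpha\sigma(d)$, hence $\beta(\Gamma',\alpha d)=\alpha^2\beta(\Gamma',d)$, which makes both the separability condition on $S$ and the non-separability conditions on sub-clusters scale-equivariant; and second, the richness construction (uniform within-cluster pseudo-distance $c$, between-cluster pseudo-distances exceeding $\sqrt{2}\,c$), for which $Q(\Gamma,d)=\frac{n-k}{2}c^2$ and $\beta(\Gamma,d)=c^2$, together with the maximality argument based on the general lower bound $Q(\Gamma',d)\ge (n-k')\frac{\sigma(d)^2}{2}$, which gives $\beta(\Gamma',d)\ge c^2$ for every competing partition $\Gamma'$ and forces any residually separable partition to coarsen $\Gamma$. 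That last step is genuinely necessary and appears nowhere in the paper: since the function detecting residual range-$k_x$ clustering returns the clustering at the \emph{maximal} level, richness would fail if some partition with more than $k$ parts were residually separable, and your coarsening argument is precisely what rules this out. Two minor remarks: your construction tacitly assumes $k\le k_x$, a restriction the paper's definition of range-$k_x$ richness omits but clearly intends, since the function never returns partitions with more than $k_x$ parts; and your constructed $d$ need not satisfy the triangle inequality, which is harmless because the paper's pseudo-distances do not require it.
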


What is more,   consider the  discrete convergent consistency transformation keeping lowest distance  of a distance $d$ to a distance $d'$.  
As shown in \cite{RAKMAKSTW:2020:trick}, a distance function $d'$ being non-euclidean can be turned into Euclidean one $d"$ by adding an appropriate constant $\delta^2$ to each squared distance $d'(i,j)^2$ and the clustering with (kernel) $k$-means will preserve the $k$-clustering. And, with the above Theorem \ref{th:addingDelta},  the property of residual $k$ separability will be preserved.  

\section{Discussion} \label{sec:discussion} 

Kleinberg created a set of ''natural'' axioms for distance-based clustering functions which turned out to be contradictory. This axiomatic system was cited hundreds of time in the literature without pointing at the fact that this is a sick situation to have a well-established domain of clustering without having clarified the basic concepts like clusters, or the counterproductive outcome of this axiomatic set being the claim that the most popular clustering algorithm, the $k$-means algorithm, is not a clustering algorithm at all.  
The natural suggestion that arises from this situation is to pose the question whether or not an algorithm needs to produce an axiomatizable outcome in case that there  is no cluster structure in the data. Rather we should assume  the position ''garbage in - garbage out'' and concentrate to define properties for clustering algorithms when applied to  {data}sets  where there are real clusters.  

This situation led to a branch of research trying to answer the fundamental question whether or not there are clusters in the dataset. 
This note can be treated as a contribution in this direction. 
We define data structures with apriori known optima for $k$-means algorithm. Such structures can be used as an extreme case testbed for algorithms in this clustering family. 

But they shed also some light on the clustering axioms of Kleinberg. 
We showed that it is possible to define {data}sets with clean structures for various numbers of clusters, achieving partial compliance with the richness axiom of Kleinberg. Purely theoretically, our approximation via "range" richness could be extended to "range n" richness. But we deliberately imposed restrictions that a cluster should have at least two elements  and also insisted on $km$ so that the classical $k$-means++ algorithm can discover clustering with high probability. If the range should be extended behind $m$, then the distances to smaller clusters would need to be increased.  In all, this is possible.  

The title of this note contains the phrase "High-Dimensional". This is due to the fact that the operation of embedding data after consistency transformation may frequently lead to a high dimensional embedding, not present in the original data. 

We insist on using "Wide Gap" when analysing $k$-means clustering from theoretical point of view because the goal of $k$-means is not to find well separated data but rather data that minimize the sum of distance squares to cluster centers. It may lead to splitting larger "optical" clusters and merge with them small "optical" clusters if the small clusters are not far enough from the large clusters.

\section{Conclusions} \label{sec:conclusions}

This note presented two new structures of {data}sets to be clustered via $k$-means with known apriori optimal clustering. 
Both, when subjected to Kleinberg's convergent consistency transformation keeping the shortest distance, preserve the respective structural properties. But the consistency transformation leads outside the realm of Euclidean spaces. When restoring Euclidean distances, only one of them maintains its structural properties so that Kleinberg's axioms are applicable to it. 

Future research should point at directions of changes in the Kleinberg's axioms in cases when the gaps between clusters are smaller than those proposed here. 

\bibliographystyle{plain}
\bibliography{The_bib,Meine_bib}

\end{document}